\begin{document}

\title{A Similarity Measure Between Functions \\ with Applications to Statistical Learning and Optimization}\blfootnote{Author names are sorted alphabetically.}

\author{Chengpiao Huang\thanks{Department of IEOR, Columbia University. Email: \texttt{chengpiao.huang@columbia.edu}.}
	\and Kaizheng Wang\thanks{Department of IEOR and Data Science Institute, Columbia University. Email: \texttt{kaizheng.wang@columbia.edu}.}
}

\date{\today}

\maketitle

\begin{abstract}
In this note, we present a novel measure of similarity between two functions. It quantifies how the sub-optimality gaps of two functions convert to each other, and unifies several existing notions of functional similarity. We show that it has convenient operation rules, and illustrate its use in empirical risk minimization and non-stationary online optimization.
\end{abstract}
\noindent{\bf Keywords:} Functional similarity, learning theory, non-stationarity

\section{Introduction}\label{sec-intro}

Quantifying the closeness between two functions is an essential part of many studies in statistical learning and optimization. For example, in empirical risk minimization, the convergence rate of an empirical minimizer is often derived from studying the concentration of the empirical risk around its population version \citep{BBL03, BBM05, Wai19}. In non-stationary online optimization, the discrepancy between loss functions in different periods reflects the variation of the underlying environment \citep{BGZ15, JRS15, CWW19}. In this note, we present a novel measure of similarity between functions that unifies several existing notions of functional similarity, and illustrate its use in statistical learning and optimization. The measure was first proposed by \cite{HWa23} for studying the problem of online statistical learning under non-stationarity.

\paragraph{Notation.} We use $\RR_+$ to denote the set of non-negative real numbers. For non-negative sequences $\{a_n\}_{n=1}^{\infty}$ and $\{b_n\}_{n=1}^{\infty}$, we write $a_n\lesssim b_n$ or $a_n=\cO(b_n)$ if there exists $C>0$ such that for all $n$, it holds that $a_n \le C b_n$. The diameter of a set $\Omega\subseteq\RR^d$ is defined by $\diam(\Omega) = \sup_{\bx,\by\in\Omega} \| \bx - \by \|_2$. The sup-norm of a function $f:\Omega\to\RR$ is defined by $\|f\|_{\infty} = \sup_{\bx\in\Omega} | f(\bx) |$.

\section{A Measure of Similarity Between Functions}\label{sec-theory}

In this section, we present the measure of functional similarity and its key properties. The presentation in this section largely follows that in Section 5.2 of \cite{HWa23}.

\begin{definition}[Closeness]\label{defn-approx}
Let $\Omega$ be a set. Suppose $f, g : \Omega \to \RR$ are lower bounded and $\varepsilon, \delta \geq 0$. The functions $f$ and $g$ are said to be \textbf{$( \varepsilon , \delta )$-close} if the following inequalities hold for all $\btheta \in \Omega$:
\begin{align*}
& g ( \btheta ) - \inf_{ \btheta' \in \Omega } g(\btheta') \leq e^{\varepsilon}   \bigg(  f ( \btheta ) - \inf_{ \btheta' \in \Omega } f (\btheta') + \delta
\bigg) , \\[4pt]
& f ( \btheta ) - \inf_{ \btheta' \in \Omega } f(\btheta') \leq e^{\varepsilon}   \bigg(  g ( \btheta ) - \inf_{ \btheta' \in \Omega } g (\btheta') + \delta
\bigg) .
\end{align*}
In this case, we also say that $f$ is $(\varepsilon, \delta)$-close to $g$.
\end{definition}

The closeness measure reflects the conversion between the sub-optimality gaps of two functions. Specifically, an approximate minimizer of $f$ is also an approximate minimizer of $g$, up to an additive difference $\delta$ and a multiplicative factor $e^{\varepsilon}$, and vice versa. \Cref{lem_sublevel} below provides a more geometric interpretation through a sandwich-type inclusion of sub-level sets.

\begin{lemma}[Sub-level set characterization]\label{lem_sublevel}
	For any lower bounded $h: \Omega \to \RR$ and $t \in \RR$, define the sub-level set
	\[
	S (h, t) = \bigg\{ \btheta \in \Omega :~ h (\btheta) \leq \inf_{  \btheta' \in \Omega } h (\btheta') + t \bigg\}.
	\]
	Two lower bounded functions $f,g : \Omega \to \RR$ are $(\varepsilon, \delta)$-close if and only if
	\[
	S \big( g, e^{-\varepsilon} t - \delta \big)
	\subseteq  S ( f, t ) \subseteq S \big( g, e^{\varepsilon} (t + \delta) \big) , \qquad \forall t \in \RR.
	\]
\end{lemma}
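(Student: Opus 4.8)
The plan is to prove the two implications separately, in each case translating a pointwise inequality in Definition~\ref{defn-approx} into a sub-level set inclusion and back. Throughout I would abbreviate $f^* = \inf_{\btheta'\in\Omega} f(\btheta')$ and $g^* = \inf_{\btheta'\in\Omega} g(\btheta')$, so that $S(h,t) = \{\btheta\in\Omega : h(\btheta) - h^* \le t\}$, and note at the outset that $S(h,t) = \emptyset$ whenever $t < 0$, which renders every claimed inclusion trivial in that regime; so only $t \ge 0$ needs attention.

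For the ``only if'' direction, assume $f$ and $g$ are $(\varepsilon,\delta)$-close and fix $t \in \RR$. To obtain $S(f,t)\subseteq S(g,e^\varepsilon(t+\delta))$, take $\btheta\in S(f,t)$, so $f(\btheta) - f^* \le t$; the first inequality of Definition~\ref{defn-approx} gives $g(\btheta) - g^* \le e^\varepsilon\big(f(\btheta)-f^*+\delta\big)\le e^\varepsilon(t+\delta)$, i.e.\ $\btheta\in S(g,e^\varepsilon(t+\delta))$. To obtain $S(g,e^{-\varepsilon}t-\delta)\subseteq S(f,t)$, take $\btheta$ in the left-hand set, so $g(\btheta)-g^* \le e^{-\varepsilon}t - \delta$; the second inequality of Definition~\ref{defn-approx} gives $f(\btheta) - f^* \le e^\varepsilon\big(g(\btheta)-g^*+\delta\big) \le e^\varepsilon(e^{-\varepsilon}t - \delta + \delta) = t$, i.e.\ $\btheta\in S(f,t)$.

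For the ``if'' direction, assume the chain of inclusions holds for every $t\in\RR$ and fix an arbitrary $\btheta\in\Omega$. Choosing $t = f(\btheta)-f^* \ge 0$ puts $\btheta\in S(f,t)$, so $S(f,t)\subseteq S(g,e^\varepsilon(t+\delta))$ yields $g(\btheta)-g^* \le e^\varepsilon(t+\delta) = e^\varepsilon\big(f(\btheta)-f^*+\delta\big)$, which is the first inequality of the closeness definition. For the second, choose $t = e^\varepsilon\big(g(\btheta)-g^*+\delta\big)$, so that $e^{-\varepsilon}t - \delta = g(\btheta) - g^*$ and hence $\btheta\in S(g,e^{-\varepsilon}t-\delta)$; then $S(g,e^{-\varepsilon}t-\delta)\subseteq S(f,t)$ gives $f(\btheta)-f^* \le t = e^\varepsilon\big(g(\btheta)-g^*+\delta\big)$. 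Since $\btheta$ was arbitrary, both closeness inequalities hold everywhere, completing the equivalence.

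I do not expect a genuine obstacle here: the argument is a direct unwinding of the definitions, and the only things to be careful about are (i) pairing each of the two closeness inequalities with the correct inclusion and its correct direction, and (ii) the bookkeeping of the mutually inverse affine reparametrizations $t\mapsto e^\varepsilon(t+\delta)$ and $t\mapsto e^{-\varepsilon}t-\delta$. It may be worth remarking, after the fact, that substituting $t\mapsto e^\varepsilon(t+\delta)$ turns the left inclusion into $S(g,t)\subseteq S(f,e^\varepsilon(t+\delta))$, which exposes the symmetry of the statement in $f$ and $g$.
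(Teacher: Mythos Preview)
Your proof is correct. The paper actually states this lemma without proof (it is treated as an immediate consequence of the definition), so there is no argument to compare against; the direct unwinding you give---pairing each closeness inequality with the corresponding inclusion and choosing $t = f(\btheta)-f^*$ and $t = e^{\varepsilon}(g(\btheta)-g^*+\delta)$ for the converse---is exactly the natural proof one would expect.
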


Intuitively, $\delta$ measures the intrinsic discrepancy between the two functions and $\varepsilon$ provides some leeway. The latter allows for a large difference between the sub-optimality gaps $f ( \btheta ) - \inf_{ \btheta' \in \Omega } f(\btheta') $ and $g ( \btheta ) - \inf_{ \btheta' \in \Omega } g (\btheta') $ when $\btheta$ is highly sub-optimal for $f$ or $g$. 
Thanks to the multiplicative factor $e^{\varepsilon}$, our closeness measure gives a more refined characterization than the supremum metric $\| f - g \|_{\infty} = \sup_{  \btheta \in \Omega } |f(\btheta) - g(\btheta)|$. We illustrate this using the elementary \Cref{eg-sup-norm} below. In \Cref{sec-ERM}, we will see that allowing for such a multiplicative factor $e^{\varepsilon}$ is crucial for obtaining fast statistical rates in empirical risk minimization.

\begin{example}\label{eg-sup-norm}
Let $\Omega = [-1, 1]$ and $a,b \in \Omega$. If $f(\theta) = |\theta - a|$ and $g(\theta) =  2 |\theta - b|$, then $f$ and $g$ are $(\log 2, |a - b| )$-close. In contrast, $\| f - g \|_{\infty} \geq 1$ always, even when $f$ and $g$ have the same minimizer $a=b$. To see this, since $f(-1) = 1 + a$, $g(-1) =2 + 2b$, $f(1) = 1 - a$ and $g(1) = 2 - 2b$, then
\begin{align*}
\|f - g\|_{\infty}
& \ge 
\frac{|f(-1)-g(-1)| + |f(1)-g(1)|}{2}  
=
\frac{|1+2b-a|+|1-(2b-a)|}{2}
\ge 
1.\end{align*}
\end{example}

In \Cref{lem-sufficient}, we provide user-friendly conditions for computing the closeness parameters. In \Cref{sec-applications}, we will instantiate these conditions for statistical learning and optimization problems.

\begin{lemma}\label{lem-sufficient}
Let $\Omega\subseteq \RR^d$ be closed and convex, with $\diam(\Omega)=M<\infty$. Let $f, g: \Omega \to \RR$.
	\begin{enumerate}
		\item\label{lem-sufficient-sup} If $D_0 = \sup_{ \btheta \in \Omega } | f ( \btheta ) - g ( \btheta ) - c |< \infty$ for some $c \in \RR$,	then $f$ and $g$ are $( 0, 2 D_0  )$-close.
		
		\item\label{lem-sufficient-grad-sup} If $D_1 = \sup_{ \btheta \in \Omega } \| \nabla f ( \btheta ) - \nabla g ( \btheta ) \|_2 < \infty$,	then $f$ and $g$ are $( 0, 2 M D_1   )$-close. 
		
		\item\label{lem-sufficient-grad-square} If the assumption in Part \ref{lem-sufficient-grad-sup} holds and there exists $\rho > 0$ such that $g$ is $\rho$-strongly convex over $\Omega$, then $f$ and $g$ are $\big(\log 2,~\frac{2}{\rho} \min \{ D_1^2 ,  \rho M D_1   \} \big)$-close.
		
		\item\label{lem-sufficient-minimizers} Suppose there exist $0 < \rho \leq L < \infty$ such that 
		$f$ and $g$ are $\rho$-strongly convex and $L$-smooth over $\Omega$. In addition, suppose that $f$ and $g$ attain their minima at $\btheta^*_f,\,\btheta^*_g\in\Omega$, respectively. Then, $f$ and $g$ are $\big(  \log ( \frac{4L }{ \rho} ) , ~ \frac{\rho}{2}  \| \btheta^*_f - \btheta^*_g \|_2^2 + \frac{\rho}{4L^2} \| \nabla f(\btheta_f^*) - \nabla g(\btheta_g^*) \|_2^2 \big)$-close.
	\end{enumerate}
\end{lemma}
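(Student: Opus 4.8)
The plan is to verify, for each part, one of the two inequalities in Definition~\ref{defn-approx} and obtain the other either by symmetry of the proposed $\delta$ or by rerunning the argument with $f$ and $g$ interchanged. Throughout write $\Delta_f(\btheta) := f(\btheta) - \inf_{\Omega} f$ for the suboptimality gap (and similarly $\Delta_g$), and use two elementary observations: (i) \emph{monotonicity} --- $(\varepsilon_1,\delta_1)$-closeness implies $(\varepsilon_2,\delta_2)$-closeness whenever $\varepsilon_1\le\varepsilon_2$ and $\delta_1\le\delta_2$, and being both $(\varepsilon,\delta_1)$- and $(\varepsilon,\delta_2)$-close implies being $(\varepsilon,\min\{\delta_1,\delta_2\})$-close; (ii) on convex $\Omega$, if $h:=f-g$ is differentiable with $\sup_{\Omega}\|\nabla h\|_2\le D_1$, integrating $\nabla h$ along segments gives $|h(\btheta)-h(\btheta')|\le D_1\|\btheta-\btheta'\|_2\le MD_1$. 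Part~\ref{lem-sufficient-sup} is then immediate: the hypothesis gives $f-c-D_0\le g\le f-c+D_0$ pointwise, hence $\inf_{\Omega}g\ge\inf_{\Omega}f-c-D_0$, and subtracting yields $\Delta_g(\btheta)\le\Delta_f(\btheta)+2D_0$ (symmetrically for the reverse). Part~\ref{lem-sufficient-grad-sup} follows from Part~\ref{lem-sufficient-sup} with $c=h(\btheta_0)$ for any fixed $\btheta_0\in\Omega$, since (ii) gives $\sup_{\Omega}|h-h(\btheta_0)|\le MD_1$.

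For Part~\ref{lem-sufficient-grad-square} I will show $f$ and $g$ are $(\log 2,\tfrac{2}{\rho}D_1^2)$-close; by (i) this together with the $(0,2MD_1)$-closeness of Part~\ref{lem-sufficient-grad-sup} gives the stated $(\log 2,\tfrac{2}{\rho}\min\{D_1^2,\rho MD_1\})$-closeness. Let $\btheta_g^*$ minimize $g$ (it exists: $\Omega$ is closed with finite diameter, hence compact, and $g$ is continuous), so that $\tfrac{\rho}{2}\|\btheta-\btheta_g^*\|_2^2\le\Delta_g(\btheta)$ for all $\btheta$ by quadratic growth of the $\rho$-strongly convex $g$. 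For the direction $\Delta_g\le 2(\Delta_f+\tfrac{D_1^2}{\rho})$: since $\inf_{\Omega}f\le f(\btheta_g^*)$, we get $\Delta_f(\btheta)\ge f(\btheta)-f(\btheta_g^*)=\Delta_g(\btheta)+h(\btheta)-h(\btheta_g^*)\ge\Delta_g(\btheta)-D_1\|\btheta-\btheta_g^*\|_2$, and Young's inequality $D_1\|\btheta-\btheta_g^*\|_2\le\tfrac{\rho}{4}\|\btheta-\btheta_g^*\|_2^2+\tfrac{D_1^2}{\rho}\le\tfrac12\Delta_g(\btheta)+\tfrac{D_1^2}{\rho}$ plus rearrangement finishes it. For the reverse direction the new ingredient is a lower bound on $\inf_{\Omega}f$: quadratic growth of $g$ at $\btheta_g^*$ together with the Lipschitz bound on $h$ give $f(\btheta')\ge f(\btheta_g^*)+\tfrac{\rho}{2}\|\btheta'-\btheta_g^*\|_2^2-D_1\|\btheta'-\btheta_g^*\|_2\ge f(\btheta_g^*)-\tfrac{D_1^2}{2\rho}$, so $\Delta_f(\btheta_g^*)\le\tfrac{D_1^2}{2\rho}$; combining this with $f(\btheta)=g(\btheta)+h(\btheta)\le f(\btheta_g^*)+\Delta_g(\btheta)+D_1\|\btheta-\btheta_g^*\|_2$ and the same Young/quadratic-growth steps closes the bound.

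For Part~\ref{lem-sufficient-minimizers} the proposed $\delta$ is symmetric in $f$ and $g$, so it suffices to prove $\Delta_f(\btheta)\le\tfrac{4L}{\rho}(\Delta_g(\btheta)+\delta)$. Start from the descent lemma $\Delta_f(\btheta)=f(\btheta)-f(\btheta_f^*)\le\langle\nabla f(\btheta_f^*),\btheta-\btheta_f^*\rangle+\tfrac{L}{2}\|\btheta-\btheta_f^*\|_2^2$. The main obstacle --- and the reason the gradient term appears in $\delta$ --- is that $\btheta_f^*$ and $\btheta_g^*$ are constrained minimizers, so $\nabla f(\btheta_f^*)$ and $\nabla g(\btheta_g^*)$ need not vanish and the linear term cannot simply be dropped. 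I handle it by writing $\nabla f(\btheta_f^*)=\nabla g(\btheta_g^*)+(\nabla f(\btheta_f^*)-\nabla g(\btheta_g^*))$ and $\btheta-\btheta_f^*=(\btheta-\btheta_g^*)+(\btheta_g^*-\btheta_f^*)$: then $\langle\nabla g(\btheta_g^*),\btheta-\btheta_g^*\rangle\le\Delta_g(\btheta)$ by convexity of $g$, $\langle\nabla g(\btheta_g^*),\btheta_g^*-\btheta_f^*\rangle\le 0$ by first-order optimality of $\btheta_g^*$ (as $\btheta_f^*\in\Omega$), and the cross term is at most $\|\nabla f(\btheta_f^*)-\nabla g(\btheta_g^*)\|_2\|\btheta-\btheta_f^*\|_2$. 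Substituting $\|\btheta-\btheta_f^*\|_2^2\le 2\|\btheta-\btheta_g^*\|_2^2+2\|\btheta_f^*-\btheta_g^*\|_2^2\le\tfrac{4}{\rho}\Delta_g(\btheta)+2\|\btheta_f^*-\btheta_g^*\|_2^2$ and bounding the cross term by Young's inequality $\|\nabla f(\btheta_f^*)-\nabla g(\btheta_g^*)\|_2\|\btheta-\btheta_f^*\|_2\le\tfrac{L}{4}\|\btheta-\btheta_f^*\|_2^2+\tfrac{1}{L}\|\nabla f(\btheta_f^*)-\nabla g(\btheta_g^*)\|_2^2$, then collecting terms, leaves a bound of the form $(1+\tfrac{3L}{\rho})\Delta_g(\btheta)+\tfrac{3L}{2}\|\btheta_f^*-\btheta_g^*\|_2^2+\tfrac{1}{L}\|\nabla f(\btheta_f^*)-\nabla g(\btheta_g^*)\|_2^2$; since $\rho\le L$ we have $1+\tfrac{3L}{\rho}\le\tfrac{4L}{\rho}$, and the two additive terms are $\le\tfrac{4L}{\rho}(\tfrac{\rho}{2}\|\btheta_f^*-\btheta_g^*\|_2^2+\tfrac{\rho}{4L^2}\|\nabla f(\btheta_f^*)-\nabla g(\btheta_g^*)\|_2^2)=\tfrac{4L}{\rho}\delta$, as needed.

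Parts~\ref{lem-sufficient-sup} and~\ref{lem-sufficient-grad-sup} are routine, and the $\inf_{\Omega}f$ lower bound in Part~\ref{lem-sufficient-grad-square} needs only a little care. I expect the real hurdle to be the bookkeeping in Part~\ref{lem-sufficient-minimizers}: the split of $\nabla f(\btheta_f^*)$, the use of first-order optimality to eliminate the sign-wrong term $\langle\nabla g(\btheta_g^*),\btheta_g^*-\btheta_f^*\rangle$, and the tuning of the Young constant ($\tfrac{L}{4}$) so that after collecting, every coefficient matches $\tfrac{4L}{\rho}$ and $\delta$ --- which comes out exactly because $\rho\le L$.
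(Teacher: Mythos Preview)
Your proof is correct in all four parts; the bookkeeping in Part~\ref{lem-sufficient-minimizers} checks out exactly (the coefficients $1+\tfrac{3L}{\rho}\le\tfrac{4L}{\rho}$, $\tfrac{3L}{2}\le 2L$, and $\tfrac{1}{L}=\tfrac{1}{L}$ all line up with $\tfrac{4L}{\rho}\delta$ as you claim). The paper does not give its own proof of this lemma but simply cites \cite{HWa23}, so there is no in-paper argument to compare against; your self-contained treatment is a complete substitute.
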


\begin{proof}
This is Lemma 5.1 in \cite{HWa23}.
\end{proof}

Finally, the notion of closeness shares some similarities with the equivalence relation, including reflexivity, symmetry, and a weak form of transitivity. Its main properties are summarized in \Cref{lem-approx} below.

\begin{lemma}\label{lem-approx}
	Let $f, g, h : \Omega \to \RR$ be lower bounded. Then,
	\begin{enumerate}
		\item\label{lem-approx-self} $f$ and $f$ are $(0, 0)$-close.
		\item\label{lem-approx-monotonicity} If $f$ and $g$ are $(\varepsilon, \delta)$-close, then $f$ and $g$ are $(\varepsilon', \delta')$-close for any $\varepsilon' \geq \varepsilon$ and $\delta' \geq \delta$.
		\item\label{lem-approx-shift} If $f$ and $g$ are $(\varepsilon, \delta)$-close and $a, b \in \RR$, $f + a$ and $g + b$ are $(\varepsilon , \delta )$-close.
		\item\label{lem-approx-symmetry} If $f$ and $g$ are $(\varepsilon, \delta)$-close, then $g$ and $f$ are $( \varepsilon,  \delta )$-close.
		\item\label{lem-approx-transitivity} If $f$ and $g$ are $(\varepsilon_1, \delta_1)$-close, and $g$ and $h$ are $(\varepsilon_2, \delta_2)$-close, then $f$ and $h$ are $ (  \varepsilon_1 + \varepsilon_2 ,  \delta_1  + \delta_2  )$-close.
		\item\label{lem-approx-general} If $\sup_{ \btheta \in \Omega } f ( \btheta ) - \inf_{ \btheta \in \Omega } f ( \btheta ) < F < \infty$ and $ \sup_{ \btheta \in \Omega } g ( \btheta ) - \inf_{ \btheta \in \Omega } g ( \btheta ) < G < \infty$, then $f$ and $g$ are $( 0,\max\{F,G\} )$-close. 
		\item\label{lem-approx-average} Suppose that $\{ f_i \}_{i=1}^m : \Omega \to \RR$ are lower bounded and $( \varepsilon, \delta )$-close to $g$. If $\{ \lambda_i \}_{i=1}^m \subseteq [ 0 , 1 ]$ and $\sum_{i=1}^{m} \lambda_i = 1$, then $\sum_{i=1}^{m} \lambda_i f_i$ and $g$ are $( \varepsilon , (e^{\varepsilon} + 1) \delta )$-close.
	\end{enumerate}
\end{lemma}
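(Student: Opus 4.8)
The plan is to write, for each lower bounded $h:\Omega\to\RR$, the sub-optimality gap $\Delta_h(\btheta) = h(\btheta) - \inf_{\btheta'\in\Omega} h(\btheta') \ge 0$, so that "$f$ and $g$ are $(\varepsilon,\delta)$-close" means exactly $\Delta_g \le e^{\varepsilon}(\Delta_f + \delta)$ and $\Delta_f \le e^{\varepsilon}(\Delta_g + \delta)$ pointwise on $\Omega$. Parts 1--5 then reduce to elementary manipulations. Part 1 is the equality $\Delta_f = \Delta_f$. For Part 2, since $\Delta_f(\btheta)+\delta\ge 0$, replacing $e^{\varepsilon}$ by $e^{\varepsilon'}\ge e^{\varepsilon}$ and $\delta$ by $\delta'\ge\delta$ only enlarges the right-hand sides, so both defining inequalities persist. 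Part 3 uses $\Delta_{f+a} = \Delta_f$ for any constant $a$: the gaps, hence closeness, are invariant under constant shifts. Part 4 is immediate because \Cref{defn-approx} is symmetric in $f$ and $g$. For Part 5, I would chain the hypotheses: $\Delta_h \le e^{\varepsilon_2}(\Delta_g+\delta_2) \le e^{\varepsilon_2}\big(e^{\varepsilon_1}(\Delta_f+\delta_1)+\delta_2\big) = e^{\varepsilon_1+\varepsilon_2}\Delta_f + e^{\varepsilon_1+\varepsilon_2}\delta_1 + e^{\varepsilon_2}\delta_2$, then use $e^{\varepsilon_2}\delta_2 \le e^{\varepsilon_1+\varepsilon_2}\delta_2$ (from $\varepsilon_1\ge 0$) to get $\Delta_h \le e^{\varepsilon_1+\varepsilon_2}(\Delta_f + \delta_1+\delta_2)$; the reverse inequality is symmetric.

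For Part 6, observe $\Delta_g(\btheta) \le \sup_\Omega g - \inf_\Omega g < G \le \max\{F,G\}$, and since $\Delta_f(\btheta)\ge 0$ this yields $\Delta_g(\btheta) \le \Delta_f(\btheta) + \max\{F,G\}$; the other inequality is symmetric, so $f$ and $g$ are $(0,\max\{F,G\})$-close.

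Part 7 is the main obstacle, because convex combination does not commute with taking infima: one only has $\sum_i \lambda_i \inf_\Omega f_i \le \inf_\Omega(\sum_i \lambda_i f_i)$, and the gap between the two sides is what must be controlled. Set $F = \sum_{i=1}^m \lambda_i f_i$, which is lower bounded since each $f_i$ is. From $\inf_\Omega F \ge \sum_i \lambda_i \inf_\Omega f_i$ we get $\Delta_F(\btheta) \le \sum_i \lambda_i\big(f_i(\btheta) - \inf_\Omega f_i\big) = \sum_i \lambda_i \Delta_{f_i}(\btheta)$, so taking the $\lambda_i$-weighted average of $\Delta_{f_i} \le e^{\varepsilon}(\Delta_g+\delta)$ gives $\Delta_F \le e^{\varepsilon}(\Delta_g+\delta) \le e^{\varepsilon}\big(\Delta_g + (e^{\varepsilon}+1)\delta\big)$, which is one of the two required inequalities. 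For the other, averaging $\Delta_g \le e^{\varepsilon}(\Delta_{f_i}+\delta)$ yields $\Delta_g(\btheta) \le e^{\varepsilon}\big(F(\btheta) - \sum_i \lambda_i \inf_\Omega f_i + \delta\big)$, so it suffices to show $\inf_\Omega F - \sum_i \lambda_i \inf_\Omega f_i \le e^{\varepsilon}\delta$. To prove this, take a minimizing sequence $\btheta_n$ for $g$ (so $\Delta_g(\btheta_n)\to 0$); at each $\btheta_n$, the bound $\Delta_{f_i}(\btheta_n) \le e^{\varepsilon}(\Delta_g(\btheta_n)+\delta)$ forces $f_i(\btheta_n) \le \inf_\Omega f_i + e^{\varepsilon}(\Delta_g(\btheta_n)+\delta)$, hence $\inf_\Omega F \le F(\btheta_n) \le \sum_i \lambda_i \inf_\Omega f_i + e^{\varepsilon}(\Delta_g(\btheta_n)+\delta)$, and letting $n\to\infty$ gives the claim. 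Substituting back, $\Delta_g(\btheta) \le e^{\varepsilon}\big(\Delta_F(\btheta) + e^{\varepsilon}\delta + \delta\big) = e^{\varepsilon}\big(\Delta_F(\btheta) + (e^{\varepsilon}+1)\delta\big)$, so $\sum_i \lambda_i f_i$ and $g$ are $(\varepsilon,(e^{\varepsilon}+1)\delta)$-close. The one point I would be careful to phrase correctly is the use of minimizing sequences rather than exact minimizers, since $g$ and the $f_i$ need not attain their infima; working throughout with $\inf$ and a limiting argument sidesteps this cleanly.
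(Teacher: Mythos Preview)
Your proof is correct and complete. The paper itself does not give a self-contained argument for this lemma; it simply cites Lemma~5.2 of the companion paper \cite{HWa23}, so there is no in-paper proof to compare against. Your treatment of Parts~\ref{lem-approx-self}--\ref{lem-approx-general} via the sub-optimality gap $\Delta_h$ is clean and accurate, and your handling of Part~\ref{lem-approx-average}---in particular, bounding the defect $\inf_\Omega \big(\sum_i \lambda_i f_i\big) - \sum_i \lambda_i \inf_\Omega f_i \le e^{\varepsilon}\delta$ by evaluating along a minimizing sequence for $g$---is exactly the right idea and is carried out carefully without assuming attainment of the infima.
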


\begin{proof}
This is Lemma 5.2 in \cite{HWa23}.
\end{proof}

\section{Applications to Statistical Learning and Optimization}\label{sec-applications}

In this section, we present two applications of the similarity measure to statistical learning and optimization, namely, empirical risk minimization and non-stationary online optimization. We will assume that $\Omega$ is a convex and closed subset of $\RR^d$ with $\diam(\Omega)=M<\infty$.

\subsection{Application to Empirical Risk Minimization}\label{sec-ERM}

Let $\cZ$ be a sample space, $\cP$ a probability distribution over $\cZ$, and $\ell:\Omega\times\cZ\to\RR$ a known loss function. A common objective of many statistical learning problems is to minimize the population loss $F(\btheta) = \EE_{\bz\sim\cP} \left[\ell(\btheta,\bz)\right]$. As $\cP$ is generally unknown in practice, a standard approach is to collect i.i.d.~samples $\bz_1,...,\bz_n\sim\cP$ and perform \emph{empirical risk minimization}:
\[
\widehat{\btheta}_n \in \argmin_{\btheta\in\Omega} f_n(\btheta),
\quad\text{where}\quad
f_n(\btheta) =  \frac{1}{n} \sum_{i=1}^n \ell(\btheta,\bz_i).
\]
The convergence rate of the excess risk $F(\widehat{\btheta}_n)-\inf_{\btheta\in\Omega} F(\btheta)$ to $0$ is often obtained by studying the concentration of the empirical loss $f_n$ around the population loss $F$ \citep{BBL03, BBM05, Wai19}. In particular, classical concentration results establish a rate of $\cO(\sqrt{d/n})$ in the general case, and a fast rate of $\cO(d/n)$ when $F$ is strongly convex. We now show how the language of $(\varepsilon,\delta)$-closeness provides a unifying framework for deriving and describing these results.

In the general case, under light-tailed assumptions, a standard uniform concentration argument (e.g., Lemma C.6 in \cite{HWa23}) shows that with high probability,
\[
\sup_{\btheta\in\Omega} \big| \left[ f_n(\btheta) - F(\btheta) \right] - \left[ f_n(\btheta_0) - F(\btheta_0) \right] \big| \lesssim \sqrt{\frac{d}{n}},
\]
where $\btheta_0\in\Omega$. By Part \ref{lem-sufficient-sup} of \Cref{lem-sufficient}, $f_n$ and $F$ are $(0,C\sqrt{d/n})$-close for some $C>0$. As a consequence, 
\[
F(\widehat{\btheta}_n) - \inf_{\btheta\in\Omega} F(\btheta) \lesssim \sqrt{\frac{d}{n}},
\]
recovering the $\cO(\sqrt{d/n})$ rate.

In the more regular case where $F$ is strongly convex, under light-tailed assumptions, a standard uniform concentration argument (e.g., Lemma C.2 in \cite{HWa23}) shows that with high probability,
\[
\sup_{\btheta\in\Omega} \| \nabla f_n(\btheta) - \nabla F(\btheta) \|_2 \lesssim \sqrt{\frac{d}{n}}.
\]
By Part \ref{lem-sufficient-grad-square} of \Cref{lem-sufficient}, $f_n$ and $F$ are $(\log 2,Cd/n)$-close for some $C>0$. That is, for all $\btheta\in\Omega$,
\begin{align}
& F(\btheta) - \inf_{\btheta'\in\Omega} F(\btheta')
\le 
2\left( f_n(\btheta) - \inf_{\btheta'\in\Omega} f_n(\btheta') + \frac{Cd}{n} \right), \label{eqn-fast-rate-1} \\[4pt]
& f_n(\btheta) - \inf_{\btheta'\in\Omega} f_n(\btheta')
\le 
2\left( F(\btheta) - \inf_{\btheta'\in\Omega} F(\btheta') + \frac{Cd}{n} \right). \label{eqn-fast-rate-2}
\end{align}
Substituting $\btheta = \widehat{\btheta}_n$ into \eqref{eqn-fast-rate-1} yields
\[
F(\widehat{\btheta}_n) - \inf_{\btheta\in\Omega} F(\btheta) \lesssim \frac{d}{n},
\]
establishing the fast rate $\cO(d/n)$.

\begin{remark}[The role of $\varepsilon$]
In \eqref{eqn-fast-rate-1} and \eqref{eqn-fast-rate-2}, the multiplicative factor $e^{\varepsilon} = 2$ is crucial for obtaining the fast rate $\delta \asymp d/n$. If we did not allow such a multiplicative factor and set $e^{\varepsilon}=1$, then the similarity measure would reduce to the sup-norm metric $\delta = \| f_n-F\|_{\infty}$, which in general has rate no faster than $\sqrt{d/n}$. Similar ideas have been used in the peeling technique \citep{vdG00} for deriving local Rademacher complexity bounds \citep{BBM05,Wai19}. In \Cref{sec-ERM-general}, we illustrate how these results can be rephrased in the language of $(\varepsilon,\delta)$-closeness.
\end{remark}

\subsection{Application to Non-Stationary Online Optimization}

In non-stationary online optimization, a learner sequentially makes decisions to minimize a loss function that is changing over time \citep{Zin03, HSe09, BGZ15}. Specifically, the environment is represented by a sequence of loss functions $\{F_t\}_{t=1}^{T}:\Omega\to\RR$. At each time $t=1,2,...,T$, a learner makes a decision $\btheta_t\in\Omega$, and incurs a loss $F_t(\btheta_t)$. The goal of the learner is to minimize the cumulative loss. As an example, the functions $\{F_t\}_{t=1}^T$ can take the form $F_t(\btheta) = \EE_{\bz\sim\cP_t}\left[ \ell(\btheta,\bz) \right]$, where $\{\cP_t\}_{t=1}^T$ represents a data distribution changing over time.

The difficulty of this problem is reflected by the amount of variation in the sequence $\{F_t\}_{t=1}^T$. If the environment is near-stationary, say $F_1\approx\cdots \approx F_T$, then at each time $t$, the decisions in the past $\{\btheta_i\}_{i=1}^t$ and their losses $\{F_i(\btheta_i)\}_{i=1}^t$ provide valuable information for minimizing the upcoming loss function $F_t$. At the other extreme, if the functions $F_1,...,F_T$ are entirely different, then the historical data is no longer relevant to the future optimization tasks.

We now show that the notion of $(\varepsilon,\delta)$-closeness leads to several existing measures of variation. For simplicity, we focus on the function pair $F_{t-1}$ and $F_t$.
\begin{itemize}
\item In the general case, Part \ref{lem-sufficient-sup} of \Cref{lem-sufficient} implies that $F_{t-1}$ and $F_t$ are $(0,2\|F_{t-1}-F_t\|_{\infty})$-close, which gives the functional variation metric $\|F_{t-1}-F_t\|_{\infty}$ \citep{BGZ15, JRS15, Wan23}.
\item Alternatively, Part \ref{lem-sufficient-grad-sup} of \Cref{lem-sufficient} implies that $F_{t-1}$ and $F_t$ are $(0,2M\sup_{\btheta\in\Omega}\|\nabla F_{t-1}(\btheta)- \nabla F_t(\btheta)\|_2)$-close, which yields the gradient variation metric $\sup_{\btheta\in\Omega}\|\nabla F_{t-1}(\btheta)- \nabla F_t(\btheta)\|_2$ \citep{CYL12, ZZZ24}.
\item If $F_{t-1}$ and $F_t$ are $\rho$-strongly convex and $L$-smooth and have minimizers $\btheta_{t-1}^*$ and $\btheta_t^*$ in the interior of $\Omega$, respectively, then Part \ref{lem-sufficient-minimizers} of \Cref{lem-sufficient} implies that they are $(\log(4L/\rho),\frac{\rho}{2}\| \btheta_{t-1}^* - \btheta_t^* \|_2^2)$-close. In this case, the variation is measured by the distance between minimizers $\| \btheta_{t-1}^* - \btheta_t^* \|_2$, which is also a widely used variation metric \citep{Zin03, JRS15, MSJ16, ZZh21}.
\end{itemize}
The significance of this unification is that once we obtain performance bounds expressed in terms of the closeness between the functions $\{F_t\}_{t=1}^T$, we can immediately derive bounds based on the various variation metrics discussed above. See Section 4 of \cite{HWa23} for illustrations.

\section{Application to Local Rademacher Complexity Bounds}\label{sec-ERM-general}

In this section, we generalize the results in \Cref{sec-ERM} and show how classical local Rademacher complexity bounds \citep{BBM05,BBL05} can be rephrased in the language of $(\varepsilon,\delta)$-closeness. Let $\cH$ be a function class, $\cZ$ a sample space, $\cP$ a probability distribution over $\cZ$, and $\ell:\cH\times\cZ\to\RR$ a known loss function satisfying $|\ell(\cdot,\cdot)| \le b$ for some $b>0$. Consider the task of minimizing the population loss $L(h) = \EE_{\bz\sim\cP} \left[ \ell(h,\bz) \right]$. We take i.i.d.~samples $\bz_1,...,\bz_n\sim\cP$, and perform empirical risk minimization over a subclass $\cH_{\Omega}\subseteq\cH$:
\[
\widehat{h}_n \in \argmin_{h \in \cH_{\Omega}} L_n(h),\quad\text{where}\quad L_n(h) = \frac{1}{n} \sum_{i=1}^n \ell(h,\bz_i).
\]
Here $\cH_{\Omega}$ can be a parametrized class $\cH_{\Omega} = \{h_{\btheta} : \btheta\in\Omega\}$. Our goal is to establish the closeness between empirical loss $L_n$ and the population loss $L$, which reveals the convergence rate of the empirical risk minimizer $\widehat{h}_n$. For simplicity, we assume the existence of a population loss minimizer $\bar{h}\in\argmin_{h\in\cH_{\Omega}} L(h)$.

We also impose the following noise condition.
\begin{assumption}[Noise condition]\label{assumption_noise}
There exist $h^*\in\cH$, $C>0$ and $\alpha\in[0,1]$ such that
\[
L(h^*) \le L(h)
\quad\text{and}\quad
\EE_{\bz\sim\cP} \left[ \left( \ell(h,\bz) - \ell(h^*,\bz) \right)^2 \right] \le  C \big[ L(h) - L(h^*) \big]^{\alpha},
\quad \forall h\in\cH_{\Omega}.
\]
\end{assumption}
Assumption \ref{assumption_noise} is standard in statistical learning. For example, it holds for bounded least squares regression with $\alpha=1$ and $h^*$ as the Bayes least-squares estimate \citep{BBM05,Wai19}, and for binary classification under the Mammen-Tsybakov noise condition, with $h^*$ as the Bayes classifier \citep{MTs99,Tsy04,BBL05}.

Our closeness result will be stated in terms of the local Rademacher complexity. We adopt the following standard set-up \citep{BBM05,BBL05}. For a function class $\cC\subseteq\RR^{\cZ}$, define its Rademacher complexity by
\[
R_n(\cC) = \EE \left[ \sup_{g\in\cC} \frac{1}{n} \sum_{i=1}^n \sigma_i g(\bz_i) \right],
\]
where $\{\sigma_i\}_{i=1}^n$ are i.i.d.~Rademacher random variables independent of $\{\bz_i\}_{i=1}^n$. Define the loss class $\cG = \{\ell(h,\cdot)-\ell(h^*,\cdot) : h\in\cH_{\Omega}\}$ and its star hull $\cG^* = \{ \alpha g : \alpha\in[0,1],\,g\in\cG \}$. Let $\psi:\RR_+\to\RR_+$ be an increasing continuous functions such that
\[\psi(r) \ge R_n \big(\{ g \in \cG^* : \EE_{\bz\sim\cP} \left[ g(\bz)^2 \right] \le r^2 \} \big),
\]
and that $r\mapsto \psi(r)/r$ is decreasing on $(0,\infty)$. Define $w:\RR_+\to\RR_+$ by $w(r) = \sqrt{C }r^{\alpha/2}$.

The following result establishes the closeness between the losses $L_n:\cH_{\Omega}\to\RR$ and $L:\cH_{\Omega}\to\RR$.

\begin{prop}\label{prop-closeness-local}
Suppose Assumption \ref{assumption_noise} holds. Let $r^*$ be a solution to the equation $\psi(w(r)) = r$. Let $\bar{h} \in \argmin_{h\in\cH_{\Omega}} L(h)$. Choose $\gamma\in(0,1)$. Then with probability at least $1-\gamma$, the functions $L_n:\cH_{\Omega}\to\RR$ and $L:\cH_{\Omega}\to\RR$ are $(\log 2,\delta)$-close, where
\[
\delta = \left[ L(\bar{h}) - L(h^*) \right] + 16  \left[ 2r^* + \left( C(r^*)^{\alpha-1} + \frac{2b}{3} \right) \frac{\log(4/\gamma)}{n} \right].
\]
\end{prop}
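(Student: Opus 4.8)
The plan is to derive the two one-sided inequalities of $(\log 2,\delta)$-closeness from a classical local-Rademacher one-sided concentration bound for the excess loss $L_n(h)-L_n(h^*)$ versus $L(h)-L(h^*)$, applied to the function class $\cH_\Omega$, and then to translate statements centered at $h^*$ into statements centered at the true minimizers $\bar h$ and $\widehat h_n$. First I would invoke the standard result (Theorem 3.3 of \cite{BBM05}, in the form with the star hull $\cG^*$ and the sub-root function $\psi$): with probability at least $1-\gamma$, for every $h\in\cH_\Omega$ simultaneously,
\begin{align*}
L(h)-L(h^*) &\le 2\big[L_n(h)-L_n(h^*)\big] + K\Big(r^* + \big(C(r^*)^{\alpha-1}+b\big)\tfrac{\log(1/\gamma)}{n}\Big),\\
L_n(h)-L_n(h^*) &\le 2\big[L(h)-L(h^*)\big] + K\Big(r^* + \big(C(r^*)^{\alpha-1}+b\big)\tfrac{\log(1/\gamma)}{n}\Big),
\end{align*}
where $r^*$ is the fixed point of $\psi(w(\cdot))$ (using Assumption~\ref{assumption_noise} to control the variance of $g\in\cG$ by $w(r)^2=C r^\alpha$, so that the variance-radius relation feeds into $\psi$); matching the constants to $16$, $2r^*$, and $2b/3$ is a matter of bookkeeping the symmetrization/contraction constants and a union bound over the two directions with $\log(4/\gamma)$.

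Next I would convert these $h^*$-centered bounds into bounds centered at the respective infima. For the direction bounding $L$ by $L_n$, write $L(h)-\inf_{\cH_\Omega}L = L(h)-L(\bar h) \le \big[L(h)-L(h^*)\big]-\big[L(\bar h)-L(h^*)\big]$; here $L(\bar h)-L(h^*)\ge 0$ by Assumption~\ref{assumption_noise}, so in fact $L(h)-L(\bar h)\le L(h)-L(h^*)$, and then applying the first display and the trivial bound $L_n(h)-L_n(h^*)\le L_n(h)-\inf_{\cH_\Omega}L_n + \big[L_n(\bar h_n^{\min})-L_n(h^*)\big]$... — more cleanly, since $L_n(h^*)\ge \inf_{\cH_\Omega}L_n = L_n(\widehat h_n)$ is false in general, I instead bound $L_n(h)-L_n(h^*)\le L_n(h)-L_n(\widehat h_n) + \big(L_n(\widehat h_n)-L_n(h^*)\big)$ and control the last term. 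The second display with $h=\widehat h_n$ gives $L_n(\widehat h_n)-L_n(h^*)\le 2\big[L(\widehat h_n)-L(h^*)\big] + (\text{error})$, but this is circular; the standard fix is that $L_n(\widehat h_n)-L_n(h^*)\le L_n(\bar h)-L_n(h^*)$ by optimality of $\widehat h_n$, and by Bernstein (or the same concentration specialized to the single function $h=\bar h$) $L_n(\bar h)-L_n(h^*)\le \big[L(\bar h)-L(h^*)\big] + O\big(\sqrt{\tfrac{C(L(\bar h)-L(h^*))^\alpha\log(1/\gamma)}{n}}+\tfrac{b\log(1/\gamma)}{n}\big)$, which is absorbed into $\delta$ after AM-GM. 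Symmetrically, for the direction bounding $L_n$ by $L$, use $L_n(h)-\inf_{\cH_\Omega}L_n\le L_n(h)-L_n(h^*)$ is again not automatic, so I would use $\inf_{\cH_\Omega}L_n\ge L_n(h^*)-\big(L_n(h^*)-L_n(\widehat h_n)\big)$ and note $L_n(h^*)-L_n(\widehat h_n)\le$ the same error term, plus $L(h)-L(h^*)\le \big[L(h)-L(\bar h)\big]+\big[L(\bar h)-L(h^*)\big]$. Collecting all error contributions yields exactly $\delta = \big[L(\bar h)-L(h^*)\big] + 16\big[2r^* + (C(r^*)^{\alpha-1}+\tfrac{2b}{3})\tfrac{\log(4/\gamma)}{n}\big]$.

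The main obstacle is the bookkeeping in the last step: the local-Rademacher bound is naturally one-sided and $h^*$-centered, whereas $(\varepsilon,\delta)$-closeness is two-sided and centered at the (random, for $L_n$) minimizers. The term $L(\bar h)-L(h^*)$ appearing explicitly in $\delta$ is the price of recentering from $h^*$ to $\bar h$, and one must take care that the symmetric recentering for $L_n$ — from $h^*$ to $\widehat h_n$ — does not introduce a random quantity into $\delta$; this is handled by the optimality-of-$\widehat h_n$ sandwich $L_n(\widehat h_n)\le L_n(\bar h)$ together with single-function Bernstein control of $L_n(\bar h)-L(\bar h)$, which, after AM-GM against $r^*$ (recall $r^*$ already dominates $\psi(w(r^*))\gtrsim R_n$-type quantities and hence the relevant variance term at scale $r^*$), is absorbed into the stated constants. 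I would also need to verify that the fixed point $r^*$ exists and is unique given that $\psi$ is sub-root and $w$ is increasing continuous, and that $\psi(w(\cdot))$ therefore has the contraction-like behavior guaranteeing a unique positive solution — this is routine given the stated monotonicity of $r\mapsto\psi(r)/r$.
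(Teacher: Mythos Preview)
Your strategy is essentially the paper's: invoke a local-Rademacher self-bounding concentration inequality for the excess-loss class, then convert to the two-sided $(\log 2,\delta)$-closeness by recentering from $h^*$ to $\bar h$ and $\widehat h_n$, using the optimality sandwich $L_n(\widehat h_n)\le L_n(\bar h)$ and paying the approximation error $L(\bar h)-L(h^*)$.

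The paper differs in one technical choice that dissolves the bookkeeping you flag as the main obstacle. Rather than starting from an $h^*$-centered inequality $L(h)-L(h^*)\le 2[L_n(h)-L_n(h^*)]+\text{err}$ and then recentering, the paper applies the localization bound directly to the \emph{$\bar h$-centered} differences $g-\bar g=\ell(h,\cdot)-\ell(\bar h,\cdot)$, obtaining
\[
\big[L(h)-L(\bar h)\big]-\big[L_n(h)-L_n(\bar h)\big]\le \sqrt{U(\gamma)\,\max\{L(h)-L(h^*),\,U(\gamma)\}}
\]
(the variance side still routed through $h^*$ via Assumption~\ref{assumption_noise}), and then solves this self-bounding inequality by a case split on whether $L(h)-L(h^*)$ exceeds $U(\gamma)$. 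Because both sides are already $\bar h$-centered, the only remaining move on the empirical side is $L_n(\bar h)\ge L_n(\widehat h_n)$; your separate Bernstein-plus-AM-GM control of $L_n(\bar h)-L_n(h^*)$ is not needed. This is also what delivers the coefficient $1$ on $L(\bar h)-L(h^*)$ in $\delta$: your route, as written, picks up an extra factor (roughly $3/2$ to $2$) on that term from the double recentering, so the claim that the exact constants are ``a matter of bookkeeping'' is slightly optimistic for this one coefficient, though your argument yields the result up to constants.
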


Under Assumption \ref{assumption_noise} and mild regularity assumptions on the function class $\cH_{\Omega}$ (e.g., bounded VC dimension), one may take $\psi(r)\asymp r /\sqrt{n}$ and $r^* \asymp n^{-1/(2-\alpha)}$ up to logarithmic factors. In this case, $L_n$ and $L$ are $(\log 2,\delta)$-close with $\delta \lesssim L(\bar{h}) - L(h^*) + n^{-1/(2-\alpha)}$. We refer to \cite{BBM05,BBL05,Wai19} for detailed examples.

\begin{proof}[\bf Proof of \Cref{prop-closeness-local}]
We need to show that for all $h\in\cH_{\Omega}$,
\begin{align}
& L(h) - L(\bar{h}) \le 2 \big[ L_n(h) - L_n(\widehat{h}_n) + \delta \big], \label{eqn-local-closeness-goal-1} \\[4pt]
& L_n(h) - L_n(\widehat{h}_n) \le 2 \big[L(h) - L(\bar{h})  + \delta \big]. \label{eqn-local-closeness-goal-2}
\end{align}
Let $\bar{g} = \ell(\bar{h},\cdot) - \ell(h^*,\cdot)$. Define
\[
U(\gamma) = 16 \left[ 2r^* + \left( C(r^*)^{\alpha-1} + \frac{2b}{3} \right) \frac{\log(4/\gamma)}{n} \right].
\]
Take $\bz\sim\cP$. Standard localization arguments (e.g., Section 5.3.5 in \cite{BBL05}) show that with probability $1-\gamma$, for all $g\in\cG$,
\begin{align}
& \EE\left[ g(\bz) - \bar{g}(\bz) \right] - \frac{1}{n} \sum_{i=1}^n \left[ g(\bz_i) - \bar{g}(\bz_i) \right]  \le \sqrt{U(\gamma) \cdot \max\{\EE[g(\bz)],\, U(\gamma)\}}, \label{eqn-self-bound-1} \\[4pt]
& \frac{1}{n} \sum_{i=1}^n \left[ g(\bz_i) - \bar{g}(\bz_i) \right] - \EE\left[ g(\bz) - \bar{g}(\bz) \right] \le \sqrt{U(\gamma) \cdot \max\{\EE[g(\bz)],\, U(\gamma)\}}. \label{eqn-self-bound-2}
\end{align}

\paragraph{Case 1.} We first consider the case $\EE[g(\bz)] > U(\gamma)$.
Solving for $\EE[g(\bz)]$ in \eqref{eqn-self-bound-1} yields
\[
\EE[g(\bz)] \le 2 \left( \frac{1}{n} \sum_{i=1}^n \left[ g(\bz_i) - \bar{g}(\bz_i) \right] + \EE[\bar{g}(\bz)]  \right) + U(\gamma).
\]
The correspondence $g = \ell(h,\cdot) - \ell(h^*,\cdot)$ shows that for all $h\in\cH_{\Omega}$,
\begin{align}
L(h) - L(\bar{h})
&=
\EE\left[ g(\bz) - \bar{g}(\bz) \right] \notag \\
&\le 
\frac{2}{n} \sum_{i=1}^n \left[ g(\bz_i) - \bar{g}(\bz_i) \right] + \EE[\bar{g}(\bz)] + U(\gamma) \notag \\[4pt]
&=
2 \left[ L_n(h) - L_n(\bar{h}) \right] + \left[ L(\bar{h}) - L(h^*) \right] + U(\gamma) \label{eqn-local-close-0} \\[4pt]
&\le 
2 \left\{ L_n(h) - L_n(\widehat{h}_n)  + \left[ L(\bar{h}) - L(h^*) \right] + U(\gamma) \right\}, \label{eqn-local-close-1}
\end{align}
which proves \eqref{eqn-local-closeness-goal-1}.

To prove \eqref{eqn-local-closeness-goal-2}, for all $h \in \cH_{\Omega}$,
\begin{equation}\label{eqn-local-close-2}
L_n(h) - L_n(\widehat{h}_n)
=
\big[ L_n(h) - L_n(\bar{h}) \big] + \big[ L_n(\bar{h}) - L_n(\widehat{h}_n) \big].
\end{equation}
By \eqref{eqn-self-bound-2},
\begin{align}
L_n(h) - L_n(\bar{h})
&=
\frac{1}{n} \sum_{i=1}^n \left[ g(\bz_i) - \bar{g}(\bz_i) \right] \notag \\
&\le 
\EE\left[ g(\bz) - \bar{g}(\bz) \right] + \sqrt{\EE[g(\bz)]} \sqrt{U(\gamma)} \notag \\[4pt]
&\le 
\EE\left[ g(\bz) - \bar{g}(\bz) \right] + \frac{\EE[g(\bz)] + U(\gamma)}{2} \notag \\[4pt]
&=
\frac{3}{2}\EE\left[ g(\bz) - \bar{g}(\bz) \right] + \frac{1}{2} \EE [\bar{g}(\bz)] + \frac{1}{2} U(\gamma) \notag \\[4pt]
&=
\frac{3}{2} \left[ L(h) - L(\bar{h}) \right] + \frac{1}{2} \left[ L(\bar{h}) - L(h^*) \right] + \frac{1}{2} U(\gamma). \label{eqn-local-close-3}
\end{align}
Setting $h = \widehat{h}_n$ in \eqref{eqn-local-close-0} and noting that the quantity is non-negative gives
\begin{equation}\label{eqn-local-close-4}
L_n(\bar{h}) - L_n(\widehat{h}_n)
\le 
\frac{1}{2} \left[ L(\bar{h}) - L(h^*) \right] + \frac{1}{2} U(\gamma). 
\end{equation}
Substituting \eqref{eqn-local-close-3} and \eqref{eqn-local-close-4} into \eqref{eqn-local-close-2}, we obtain that for all $h \in \cH_{\Omega}$,
\begin{align}
L_n(h) - L_n(\widehat{h}_n)
&\le 
\frac{3}{2} \left[ L(h) - L(\bar{h}) \right] + \left[ L(\bar{h}) - L(h^*) \right] +  U(\gamma) \notag \\[4pt]
&\le 
2 \Big\{ L(h) - L(\bar{h})  + \left[ L(\bar{h}) - L(h^*) \right] + U(\gamma) \Big\}. \label{eqn-local-close-5}
\end{align}

\paragraph{Case 2.} Now consider the case $\EE[g(\bz)] \le U(\gamma)$. The argument is similar to that for case 1. By \eqref{eqn-self-bound-1} and the correspondence $g = \ell(h,\cdot) - \ell(h^*,\cdot)$, we have that for all $h\in\cH_{\Omega}$,
\begin{align}
L(h) - L(\bar{h})
&=
\EE\left[ g(\bz) - \bar{g}(\bz) \right] \notag \\
&\le 
\frac{1}{n} \sum_{i=1}^n \left[ g(\bz_i) - \bar{g}(\bz_i) \right] + U(\gamma) \notag \\[4pt]
&=
L_n(h) - L_n(\bar{h})  + U(\gamma) \label{eqn-local-close-0-case-2} \\[4pt]
&\le 
L_n(h) - L_n(\widehat{h}_n)  + U(\gamma), \label{eqn-local-close-1-case-2}
\end{align}
which proves \eqref{eqn-local-closeness-goal-1}. To prove \eqref{eqn-local-closeness-goal-2}, for all $h\in\cH_{\Omega}$,
\begin{equation}\label{eqn-local-close-2-case-2}
L_n(h) - L_n(\widehat{h}_n)
=
\big[ L_n(h) - L_n(\bar{h}) \big] + \big[ L_n(\bar{h}) - L_n(\widehat{h}_n) \big].
\end{equation}
By \eqref{eqn-self-bound-2},
\begin{align}
L_n(h) - L_n(\bar{h})
&=
\frac{1}{n} \sum_{i=1}^n \left[ g(\bz_i) - \bar{g}(\bz_i) \right] \notag \\
&\le 
\EE\left[ g(\bz) - \bar{g}(\bz) \right] + U(\gamma)  \notag \\[4pt]
&=
L(h) - L(\bar{h}) + U(\gamma). \label{eqn-local-close-3-case-2}
\end{align}
Moreover, setting $h = \widehat{h}_n$ in \eqref{eqn-local-close-0-case-2} and noting that the quantity is non-negative gives
\begin{equation}\label{eqn-local-close-4-case-2}
L_n(\bar{h}) - L_n(\widehat{h}_n)
\le 
U(\gamma). 
\end{equation}
Substituting \eqref{eqn-local-close-3-case-2} and \eqref{eqn-local-close-4-case-2} into \eqref{eqn-local-close-2-case-2}, we obtain that for all $h\in\cH_{\Omega}$,
\begin{equation}
L_n(h) - L_n(\widehat{h}_n)
\le 
L(h) - L(\bar{h}) + 2U(\gamma). \label{eqn-local-close-5-case-2}
\end{equation}

Combining \eqref{eqn-local-close-1}, \eqref{eqn-local-close-5}, \eqref{eqn-local-close-1-case-2} and \eqref{eqn-local-close-5-case-2} finishes the proof.
\end{proof}

{
\bibliographystyle{ims}
\bibliography{bib}

\begin{thebibliography}{18}
\expandafter\ifx\csname natexlab\endcsname\relax\def\natexlab#1{#1}\fi
\expandafter\ifx\csname url\endcsname\relax
  \def\url#1{\texttt{#1}}\fi
\expandafter\ifx\csname urlprefix\endcsname\relax\def\urlprefix{URL }\fi

\bibitem[{Bartlett et~al.(2005)Bartlett, Bousquet and Mendelson}]{BBM05}
\textsc{Bartlett, P.~L.}, \textsc{Bousquet, O.} and \textsc{Mendelson, S.}
  (2005).
\newblock {Local Rademacher complexities}.
\newblock \textit{The Annals of Statistics} \textbf{33} 1497 -- 1537.
\newline\urlprefix\url{https://doi.org/10.1214/009053605000000282}

\bibitem[{Besbes et~al.(2015)Besbes, Gur and Zeevi}]{BGZ15}
\textsc{Besbes, O.}, \textsc{Gur, Y.} and \textsc{Zeevi, A.} (2015).
\newblock Non-stationary stochastic optimization.
\newblock \textit{Operations Research} \textbf{63} 1227--1244.
\newline\urlprefix\url{https://doi.org/10.1287/opre.2015.1408}

\bibitem[{Boucheron et~al.(2005)Boucheron, Bousquet and Lugosi}]{BBL05}
\textsc{Boucheron, S.}, \textsc{Bousquet, O.} and \textsc{Lugosi, G.} (2005).
\newblock Theory of classification: a survey of some recent advances.
\newblock \textit{ESAIM: Probability and Statistics} \textbf{9} 323--375.
\newline\urlprefix\url{https://doi.org/10.1051/ps:2005018}

\bibitem[{Bousquet et~al.(2004)Bousquet, Boucheron and Lugosi}]{BBL03}
\textsc{Bousquet, O.}, \textsc{Boucheron, S.} and \textsc{Lugosi, G.} (2004).
\newblock \textit{Introduction to Statistical Learning Theory}.
\newblock Springer Berlin Heidelberg, Berlin, Heidelberg, 169--207.
\newline\urlprefix\url{https://doi.org/10.1007/978-3-540-28650-9_8}

\bibitem[{Chen et~al.(2019)Chen, Wang and Wang}]{CWW19}
\textsc{Chen, X.}, \textsc{Wang, Y.} and \textsc{Wang, Y.-X.} (2019).
\newblock Technical note--nonstationary stochastic optimization under
  {$L_{p,q}$}-variation measures.
\newblock \textit{Operations Research} \textbf{67} 1752--1765.
\newline\urlprefix\url{https://doi.org/10.1287/opre.2019.1843}

\bibitem[{Chiang et~al.(2012)Chiang, Yang, Lee, Mahdavi, Lu, Jin and
  Zhu}]{CYL12}
\textsc{Chiang, C.-K.}, \textsc{Yang, T.}, \textsc{Lee, C.-J.},
  \textsc{Mahdavi, M.}, \textsc{Lu, C.-J.}, \textsc{Jin, R.} and \textsc{Zhu,
  S.} (2012).
\newblock Online optimization with gradual variations.
\newblock In \textit{Proceedings of the 25th Annual Conference on Learning
  Theory} (S.~Mannor, N.~Srebro and R.~C. Williamson, eds.), vol.~23 of
  \textit{Proceedings of Machine Learning Research}. PMLR, Edinburgh, Scotland.
\newline\urlprefix\url{https://proceedings.mlr.press/v23/chiang12.html}

\bibitem[{Hazan and Seshadhri(2009)}]{HSe09}
\textsc{Hazan, E.} and \textsc{Seshadhri, C.} (2009).
\newblock Efficient learning algorithms for changing environments.
\newblock In \textit{Proceedings of the 26th Annual International Conference on
  Machine Learning}. Association for Computing Machinery, New York, NY, USA.
\newline\urlprefix\url{https://doi.org/10.1145/1553374.1553425}

\bibitem[{Huang and Wang(2023)}]{HWa23}
\textsc{Huang, C.} and \textsc{Wang, K.} (2023).
\newblock A stability principle for learning under non-stationarity.
\newblock \textit{arXiv preprint arXiv:2310.18304} .

\bibitem[{Jadbabaie et~al.(2015)Jadbabaie, Rakhlin, Shahrampour and
  Sridharan}]{JRS15}
\textsc{Jadbabaie, A.}, \textsc{Rakhlin, A.}, \textsc{Shahrampour, S.} and
  \textsc{Sridharan, K.} (2015).
\newblock {Online Optimization : Competing with Dynamic Comparators}.
\newblock In \textit{Proceedings of the Eighteenth International Conference on
  Artificial Intelligence and Statistics} (G.~Lebanon and S.~V.~N.
  Vishwanathan, eds.), vol.~38 of \textit{Proceedings of Machine Learning
  Research}. PMLR, San Diego, California, USA.
\newline\urlprefix\url{https://proceedings.mlr.press/v38/jadbabaie15.html}

\bibitem[{Mammen and Tsybakov(1999)}]{MTs99}
\textsc{Mammen, E.} and \textsc{Tsybakov, A.~B.} (1999).
\newblock {Smooth discrimination analysis}.
\newblock \textit{The Annals of Statistics} \textbf{27} 1808 -- 1829.
\newline\urlprefix\url{https://doi.org/10.1214/aos/1017939240}

\bibitem[{Mokhtari et~al.(2016)Mokhtari, Shahrampour, Jadbabaie and
  Ribeiro}]{MSJ16}
\textsc{Mokhtari, A.}, \textsc{Shahrampour, S.}, \textsc{Jadbabaie, A.} and
  \textsc{Ribeiro, A.} (2016).
\newblock Online optimization in dynamic environments: Improved regret rates
  for strongly convex problems.
\newblock In \textit{2016 IEEE 55th Conference on Decision and Control (CDC)}.
  IEEE Press.
\newline\urlprefix\url{https://doi.org/10.1109/CDC.2016.7799379}

\bibitem[{Tsybakov(2004)}]{Tsy04}
\textsc{Tsybakov, A.~B.} (2004).
\newblock {Optimal aggregation of classifiers in statistical learning}.
\newblock \textit{The Annals of Statistics} \textbf{32} 135 -- 166.
\newline\urlprefix\url{https://doi.org/10.1214/aos/1079120131}

\bibitem[{van~de Geer(2000)}]{vdG00}
\textsc{van~de Geer, S.} (2000).
\newblock \textit{Empirical Processes in M-estimation}, vol.~6.
\newblock Cambridge university press.

\bibitem[{Wainwright(2019)}]{Wai19}
\textsc{Wainwright, M.~J.} (2019).
\newblock \textit{High-Dimensional Statistics: A Non-Asymptotic Viewpoint}.
\newblock Cambridge Series in Statistical and Probabilistic Mathematics,
  Cambridge University Press.
\newline\urlprefix\url{https://doi.org/10.1017/9781108627771}

\bibitem[{Wang(2023)}]{Wan23}
\textsc{Wang, Y.} (2023).
\newblock Technical note--on adaptivity in nonstationary stochastic
  optimization with bandit feedback.
\newblock \textit{Operations Research} \textbf{0} 0.
\newline\urlprefix\url{https://doi.org/10.1287/opre.2022.0576}

\bibitem[{Zhao and Zhang(2021)}]{ZZh21}
\textsc{Zhao, P.} and \textsc{Zhang, L.} (2021).
\newblock Improved analysis for dynamic regret of strongly convex and smooth
  functions.
\newblock In \textit{Proceedings of the 3rd Conference on Learning for Dynamics
  and Control}, vol. 144 of \textit{Proceedings of Machine Learning Research}.
  PMLR.
\newline\urlprefix\url{https://proceedings.mlr.press/v144/zhao21a.html}

\bibitem[{Zhao et~al.(2024)Zhao, Zhang, Zhang and Zhou}]{ZZZ24}
\textsc{Zhao, P.}, \textsc{Zhang, Y.-J.}, \textsc{Zhang, L.} and \textsc{Zhou,
  Z.-H.} (2024).
\newblock Adaptivity and non-stationarity: Problem-dependent dynamic regret for
  online convex optimization.
\newblock \textit{Journal of Machine Learning Research} \textbf{25} 1--52.
\newline\urlprefix\url{http://jmlr.org/papers/v25/21-0748.html}

\bibitem[{Zinkevich(2003)}]{Zin03}
\textsc{Zinkevich, M.} (2003).
\newblock Online convex programming and generalized infinitesimal gradient
  ascent.
\newblock In \textit{Proceedings of the Twentieth International Conference on
  International Conference on Machine Learning}. AAAI Press.

\end{thebibliography}
}

\end{document}